\let\NAT@parse\undefined
\newtheorem{thm}{Theorem}[section]
\newtheorem{lemma}[thm]{Lemma}
\theoremstyle{definition}
\newtheorem{assump}{Assumption}
\newcommand{\norm}[1]{\Vert #1 \Vert}
\newcommand{\maxnorm}[1]{\lceil #1 \rceil}
\newcommand{\tr}{\intercal}
\newcommand{\diag}[1]{\mathrm{diag}\{#1\}}
\newcommand{\colVec}[1]{[#1]^\tr}
\newcommand{\real}{\mathbb{R}}
\newcommand{\nat}{\mathbb{N}}
\newcommand{\algName}{OneVision}
\newcommand{\controller}[0]{\pi}
\newcommand{\ccontroller}[0]{\controller^{c}}
\newcommand{\dcontroller}[0]{\controller^{d}}
\newcommand{\ideal}[1]{#1^*}
\newcommand{\pred}[2]{\tilde{#1}^{(#2)}}
\newcommand{\predi}[1]{\pred{#1}{i}} 
\newcommand{\preda}[1]{\pred{#1}{\cdot}}
\newcommand{\sysDy}{f}
\newcommand{\sysDyModel}{\hat{\sysDy}}
\newcommand{\obsDy}{h}
\newcommand{\obsDyModel}{\hat{\obsDy}}
\newcommand{\obsDelay}{T^x}
\newcommand{\actDelay}{T^u}
\newcommand{\comDelay}{T^c}
\newcommand{\totalDelay}{T^\text{all}}
\newcommand{\stateDelay}{\obsDelay}
\newcommand{\dLoss}{\ell}
\newcommand{\dRegLoss}{\dLoss_{\text{reg}}}
\newcommand{\error}[1]{\Delta #1}
\newcommand{\self}[1]{\bar{#1}}
\newcommand{\serror}[1]{\error{\self{#1}}}
\newcommand{\perror}[2]{\error{\pred{#1}{#2}}}
\newcommand{\perrori}[1]{\perror{#1}{i}}
\newcommand{\perrora}[1]{\perror{#1}{\cdot}}
\newcommand{\tauInit}{\tau_\text{init}}
\csv@pretable\begin{tabular}{|*{\csv@columncount}{c|}}\csv@tablehead,
\csv@tablefoot\end{tabular}\csv@posttable,
\csv@pretable\begin{tabular}{*{\csv@columncount}{c}}\csv@tablehead,
\csv@tablefoot\end{tabular}\csv@posttable,
\newcommand{\csvautobooktabularcenter}[2][]{\csvloop{autobooktabularcenter={#2},#1}}
\newcommand{\cl}{\,,\\}  
\newcommand{\heading}[1]{\textbf{#1}\quad}
\title{\LARGE \bf
\algName{}: Centralized to Distributed Controller Synthesis\\ with Delay Compensation
}
\author{Jiayi Wei$^{1}$, Tongrui Li$^{1}$, Swarat Chaudhuri$^{1}$, Isil Dillig$^{1}$, and Joydeep Biswas$^{1}$
\thanks{$^{1}$Computer Science Department,
        University of Texas at Austin, USA.
        {\tt\small \{jiayi, tongrui, swarat, isil, joydeepb\} @cs.utexas.edu}}%
}
\begin{document}

\maketitle
\thispagestyle{empty}
\pagestyle{empty}

\begin{abstract}
We propose a new algorithm to simplify the controller development for
distributed robotic systems subject to external observations, disturbances, and communication delays. Unlike prior approaches that propose specialized solutions to handling communication latency for specific robotic applications, our algorithm uses an arbitrary centralized controller as the specification and automatically generates distributed controllers with communication management and delay compensation. We formulate our goal as nonlinear optimal control---using a regret minimizing objective that measures how much the distributed agents behave differently from the delay-free centralized response---and solve for optimal actions w.r.t. local estimations of this objective using gradient-based optimization. We analyze our proposed algorithm's behavior under a linear time-invariant special case and prove that the closed-loop dynamics satisfy a form of input-to-state stability w.r.t. unexpected disturbances and observations. Our experimental results on both simulated and real-world robotic tasks demonstrate the practical usefulness of our approach and show significant improvement over several baseline approaches.
\end{abstract}

\section{Introduction}
We are interested in distributed multi-agent control of robots in environments with unknown conditions or obstacles. Examples of such settings include autonomous convoy driving following a human driver and autonomous formation control of a fleet that needs to change formations in response to obstacles. Unlike in applications with fixed formation~\cite{olfati-saber_distributed_2002, dunbar_distributed_2006} or trajectory control~\cite{ghommam2009coordinated}, the agents' behavior can vary significantly based on environmental observations, such as the observed trajectory of the lead car in the convoy setting or a narrow tunnel for the formation switching setting. Thus, it is preferable to specify the \emph{behavior} of the robotic fleet, rather than their \emph{execution}, as a desired ideal central controller. Unfortunately, such ideal central controllers cannot be executed directly in a distributed setting since each agent is only capable of observing their own local state, and communication latency leads to delayed observations of other agents. While there have been a few specialized solutions for handling communication latency for specific controllers such as formation control~\cite{liao_distributed_2017} and coordinated path following~\cite{ghabcheloo_coordinated_2009}, synthesizing distributed controllers from arbitrary central controllers while accounting for communication delays has remained an open problem until now.

In this paper, we present \algName{}, an algorithm for distributed control of multi-agent systems with local observation and disturbances, in the presence of communication delays. \algName{} accepts an ideal central control function for a multi-agent system as well as a system dynamics and observation model. Given the ideal central control function, \algName{} generates local plans at every time step by minimizing a \emph{regret loss} using gradient-based optimization. This regret loss is defined as the difference between the predicted future states and actions and an \emph{ideal fleet trajectory} computed by forward-predicting the central controller on delay-compensated local observations from all agents. Since the ideal fleet trajectory cannot be locally computed in real time due to communication delays, each synthesized distributed controller also computes a local approximation of the ideal fleet trajectory and plans its future actions against this approximated objective.

Although \algName{} works with arbitrary discrete-time multi-agent controllers, we limit our theoretical analysis to cases where the system dynamics and centralized controller are linear time-invariant. We prove that the distributed agents' execution generated by \algName{} converges to the ideal fleet trajectory and is stable in the sense that smaller external disturbances lead to staying closer to the ideal trajectory. In addition, we provide empirical evidence of convergence and stability for a number of non-linear examples.







We summarize our contributions as follows:
\begin{itemize}
    \item We present \algName{}, a general algorithm for synthesizing distributed controllers from a centralized controller specification, under the presence of unknown local observations and disturbances. 
    \item We analyze the close-loop behavior of our algorithm in a linear time-invariant special case and provide theoretical guarantees on the resulting performance. Our analysis provides an error bound that is independent of the amount of the delays.
    \item We implement the proposed algorithm, 
     experimentally evaluate our algorithm on 4 multi-agent tasks, and demonstrate the practical usefulness of our approach.
\end{itemize}

\section{Related Work} 

\paragraph{Synthesis Techniques for Multi-Robot Systems}
There has been a long line of work on synthesizing reactive controllers from temporal logic specifications for multi-robot systems~\cite{kloetzer2009automatic, kloetzer2011multi, moarref2017decentralized}. These approaches typically create a discrete abstraction of the system and synthesize hybrid controllers that fulfill the logical specifications. In contrast, we focus on synthesizing continuous distributed controllers from a centralized controller specification. 

\paragraph{Model Predictive Control (MPC)}

MPC has found use in several domains that are related to this work, including controlling distributed multi-agent systems~\cite{richards_decentralized_2004, richards_robust_2007} and distributed systems with time delays~\cite{li_distributed_2013, hahn_distributed_2018}. Recently, MPC has also been applied to robotics applications such as trajectory tracking~\cite{kamel_linear_2017}, vehicle control~\cite{cairano_stochastic_2014}, flight control~\cite{alexis_robust_2016}, and cooperative landing~\cite{persson_autonomous_2019}.
Unlike many prior approaches where MPC is used to directly optimize a global performance objective defined across multiple agents, we use MPC mainly as a local control planning strategy to reduce the discrepancy between each agent's own trajectory and the corresponding (centrally predicted) ideal fleet trajectory. 

\paragraph{Distributed Control Designs and Applications}
In recent control and robotics literature, many specialized distributed controllers have been proposed for various application domains such as coordinated trajectory tracking and path following~\cite{aguiar2007coordinated, ghabcheloo_coordinated_2009, ghommam2009coordinated}, vehicle formation control~\cite{dunbar_distributed_2006, ren_distributed_2008, liao_distributed_2017}, traffic control~\cite{tettamanti2010distributed, street_multi-robot_nodate}, and information consensus~\cite{noauthor_information_2007, wen_consensus_2012}. In this work, we take a different approach and instead aim at reducing the future effort required to develop these distributed systems using a general controller synthesis framework applicable to different robotic applications. 

\paragraph{Centralized Formation Control}
Lastly, there are many prior works on multi-vehicle formation control using a centralized control law~\cite{leonard_virtual_2001, dang_formation_2019}; some also deal with the challenging case of nonholonomic robots~\cite{desai_modeling_2001, consolini_leaderfollower_2008}. In our 2D formation experiments, we employ a simple centralized control scheme based on reference point tracking~\cite{danwei_wang_full-state_2003} and rotational repulsive forces~\cite{dang_formation_2019}.

\section{Problem Definition}

Formally, our goal is to synthesize distributed robotic controllers from a given centralized controller, dynamics model, and observation model---subject to sensor noise, external disturbances and communication, actuation, and observation delays---such that the joint behaviors of the synthesized controllers approximate that of the centralized controller. For a setting with $N$ robots, the inputs to our problem are: 

\begin{enumerate}[(a)]
    \item $\sysDyModel_i$, the discrete-time dynamics models of robot $i$, for $i \in \{1\ldots N\}$, given in the form
    \begin{equation}
        \label{eq:open-dynamics}
        x_i(t+1) = \sysDyModel_i(x_i(t), u_i(t), t)\,,
    \end{equation}
    where $t\in \nat$ is the time index, $x_i(t) \in \real^{n^x_i}$ and $u_i(t) \in \real^{n^u_i}$ is robot $i$'s state and actuation vector at time step $t$, respectively. Note that $\sysDyModel_i$ can be different from the true dynamics $f_i$, with the difference being modeled as external disturbance. 
    \item $\obsDyModel_i$, the local observation model of robot $i$, for $i \in \{1\ldots N\}$, given in the form
    \begin{equation}
        \label{eq:obs-model}
        z_i(t+1) = \obsDyModel_i(z_i(t),t)\,,
    \end{equation}
    where $z_i \in \real^{n^z_i}$ is the observation of robot $i$.\footnote{Note that here we require observations to be defined as some state-independent quantities. For example, instead of using the distance to an obstacle as $z$ (which depends on the position of the robot) we can define $z$ as the obstacle's position (whose true value does not depend on where we perform the measurement).} Similarly, $\obsDyModel$ can be different from $h$, resulting in unexpected observations.
    \item $\ccontroller$, the centralized controller of the form
    \begin{equation}
    \label{eq:ccontroller}
        u(t) = \ccontroller(x(t), z(t), t)\,,
    \end{equation}
    where $x(t) = \colVec{x_1(t), \ldots, x_N(t)} \in \real^{N n^x}$ is the global state vector formed by vertically concatenate all robot states, and similarly, $z(t) = \colVec{z_1(t), \ldots, z_N(t)}$ $\in \real^{N n^z}$ and $u(t) = \colVec{u_1(t), \ldots, u_N(t)} \in \real^{N n^u}$.
    \item $\obsDelay, \actDelay, \comDelay \in \nat^+$, the discrete-time observation, actuation, and communication delay of this robotic fleet.
\end{enumerate} 

From the inputs given above, we want to synthesize $N$ distributed controllers $\dcontroller_{i},\ \forall i\in \{1\ldots N\}$ of the form
\begin{align}
    \label{eq:dcontroller}
    u_i(t+\actDelay) = \dcontroller_i(\mathcal{X}_i(t),\mathcal{Z}_i(t),\mathcal{U}_i(t),t)
\end{align}
where $\mathcal{X}_i(t),\mathcal{Z}_i(t),\mathcal{U}_i(t)$ denote the parts of the entire fleet's state history, observation history, and actuation history that are available to agent $i$ at time $t$, subject to constraints imposed by the delays. For example, we have $\mathcal{X}_i(t) = \{ x_i(\tau) | \tau \leq t - \obsDelay \} \cup \{ x_j(\tau) | j \neq i, \tau \leq t - \obsDelay - \comDelay \}$.

Since our goal is to make the distributed agents behave like they were controlled by the centralized controller $\ccontroller$, we need to formally define a loss that measures the distance between $\dcontroller$ and $\ccontroller$. In this work, we have considered two different ways to define such a loss.

\heading{Option 1: Action Loss} Since our goal is to make every agent to take an action that is similar to the one given by the centralized controller, one intuitive way to define such a loss is as
\begin{equation*}
    \ell_\text{act}(t) = \norm{\ccontroller(x(t), z(t), t) - u(t)}\,,
\end{equation*}
which simply measures the difference between the actuation output by the centralized controller (given the current state and observation) and the actual actuation $u(t)$ output by the distributed controllers. 

Minimizing this loss requires each agent to accurately predict the current state $x(t)$ and observation $z(t)$ of the entire fleet, such that we can define the output of $\dcontroller_i$ as $\ccontroller_i(\hat{x}^{(i)}(t), \hat{z}^{(i)}(t), t)$, where $\hat{x}^{(i)}(t) \approx x(t), \hat{z}^{(i)}(t) \approx z(t)$ are the $i$th agent's prediction of the current fleet state and observation.

However, predicting $\hat{x}$ and $\hat{z}$ in the closed-loop dynamics can lead to an infinite recursion. To see this, note that each agent's actuation depends on its prediction of other agent's states since
\[ 
    \forall i, \ u_i(t) = \ccontroller_i(\hat{x}^{(i)}(t), \hat{z}^{(i)}(t), t) \,.
\]
But for agent $i$ to predict agent $j$'s state, it will further need to predict $j$'s action at the previous time step $t'=t-1$:
\[
    \forall i,j, \ \hat{x}^{(i)}_j(t) = \sysDyModel_j(\hat{x}^{(i)}_j(t'), u^{(i)}_j(t'), t')\,.
\]
But this in turn requires predicting how agent $j$ would have predicted other agent's states:
\[
    \forall i,j,\  u_j^{(i)}(t') = \ccontroller_j(\hat{x}^{(i,j)}(t'), \hat{z}^{(i,j)}(t), t')\,,
\]
where the notation $\hat{x}^{(i,j)}(t')$ denotes agent $i$'s prediction (made at $t$) of agent $j$'s prediction (made at $t'$) of the fleet state at time $t'$. Therefore, we can keep unrolling the right hand side, resulting in an infinite recursion.

\heading{Option 2: Regret Loss} In this formulation, instead of requiring each agent to predict the current states of other agent, we introduce the notion of an \emph{ideal fleet trajectory} that---although not locally computable by each agent in real time---will always become computable later once more information become available through communication. We then define the loss as the difference between the actual fleet trajectory $(x, u)$ and this ideal fleet trajectory $(\ideal{x}, \ideal{u})$:
\begin{equation}
    \label{eq:regret-loss}
    \dRegLoss(t) = \norm{x(t)-\ideal{x}(t)}_{Q_x} + \norm{u(t)-\ideal{u}(t)}_{Q_u}
\end{equation}
for some positive definite $Q_u$ and positive semi-definite $Q_x$, with the notation $
\norm{a}_B = a^\tr B a$ denoting the quadratic norm of $a$ defined by matrix $B$. 

Assuming the true observation dynamics $\obsDy(z,t) = \obsDyModel(z,t) + \delta z(t)$, and the true system dynamics $\sysDy(x,u,t) = \sysDyModel(x,u,t) + \delta x(t)$, where $\delta z$ and $\delta x$ are the observation and state disturbance (both unknown a priori to us), we define the ideal fleet trajectory $\ideal{x}$ and $\ideal{u}$ as the solution to the closed-loop dynamics obtained by combining equation~\eqref{eq:open-dynamics}--\eqref{eq:ccontroller}, with $\sysDyModel$ and $\obsDyModel$ replaced by $\sysDy$ and $\obsDy$:
\begin{equation}
\label{eq:ideal-dynamics}
\begin{split}
    \ideal{x}_i(t+1) &= \sysDy_i(\ideal{x}_i(t), \ideal{u}_i(t), t) \cl
    \ideal{u}(t) &= \ccontroller(\ideal{x}(t), \ideal{z}(t), t) \cl
    \ideal{z}_i(t+1) &= \obsDy_i(\ideal{z}_i(t),t)\,.
\end{split}
\end{equation}
Note that since $\delta x_i$ and $\delta z_i$ can be measured at time $t+1$ from the observed state and actuation (assuming no observation noise) using eq.~\eqref{eq:open-dynamics} and ~\eqref{eq:obs-model} as 
\begin{align*}
    \delta x_i(t) &= x_i(t+1) - \sysDyModel_i(x_i(t), u_i(t), t)) \cl
    \delta z_i(t) &= z_i(t+1) - \obsDyModel_i(z_i(t), t)\,,
\end{align*} 
each agent will eventually be able to locally compute the same ideal fleet trajectory once other agents' $\delta x$ and $\delta z$ become available through communication. However, because communication takes time, the most recent part of the ideal fleet trajectory will have to be predicted initially and revised later, hence loss~\eqref{eq:regret-loss} in general cannot be zero and is caused by the ``regret" of each agent's past predictions.

Since minimizing this loss only requires each agent to predict the ideal fleet trajectory, which is considerably easier than predicting the actual fleet trajectory, we use this second loss definition in this work.

\begin{figure}
    \includegraphics[width=0.9\linewidth]{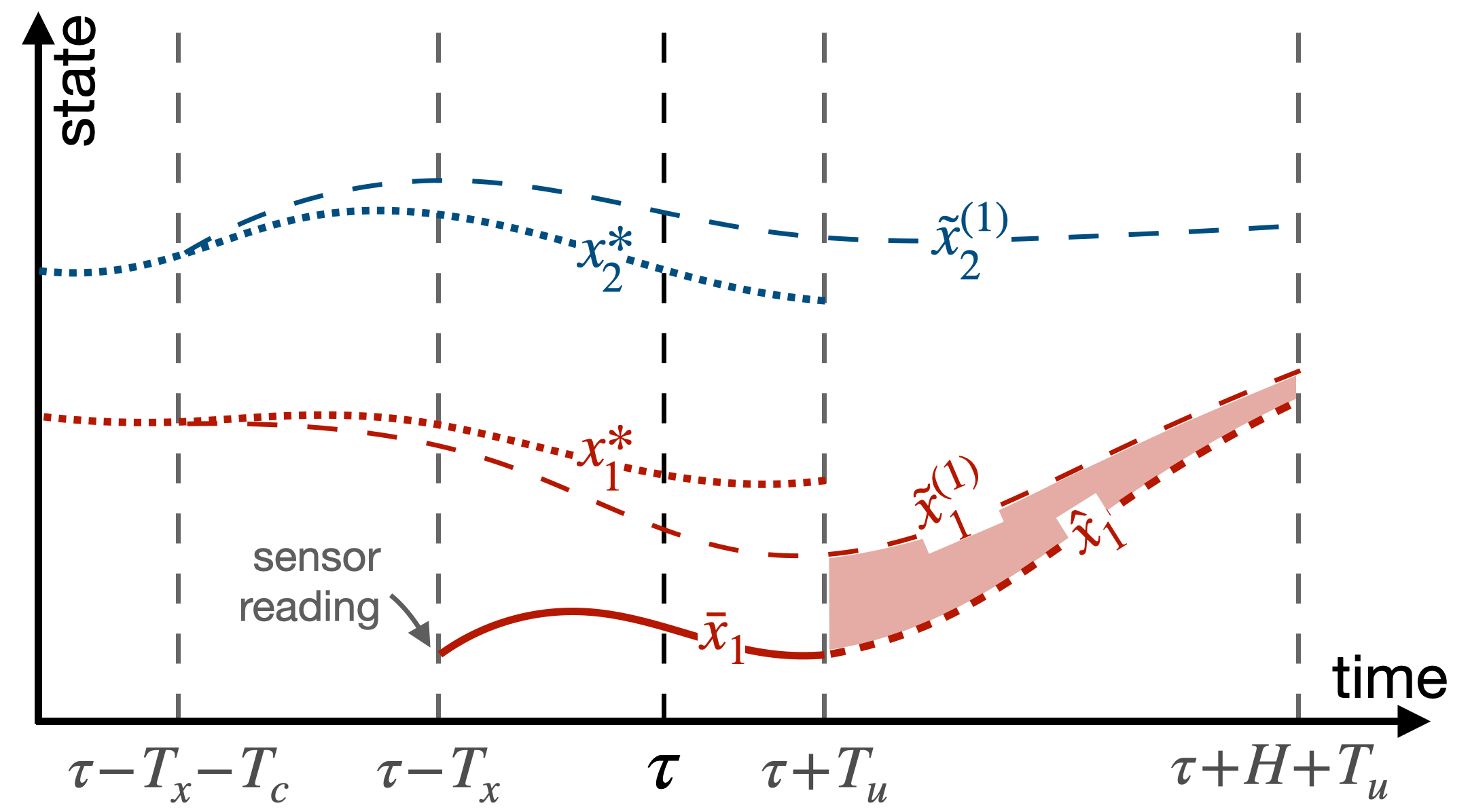}
    \centering
    \caption{
        \label{fig:algorithm-illustration}
        \textbf{How \algName{} works from the perspective of agent 1.} On the time axis, $\stateDelay, \actDelay$, and $\comDelay$ denote state, actuation, and communication delay, respectively. $\tau$ is the current time step, and $H$ is the prediction horizon. 
        The two dotted lines, $x^*_1$ and $x^*_2$, represent the ideal fleet trajectories, while $\pred{x}{1}_1$ and $\pred{x}{1}_2$ represent the most probable ideal fleet trajectory predicted by agent 1. The solid line $\bar{x}_1$ represents the self estimation computed using agent 1's recorded action history. Line $\hat{x}_1$ represents a planned future trajectory obtained by minimizing the discrepancy between $\hat{x}_1$ and $\pred{x}{1}_2$ (shown as the colored area.)}
    \vspace{-0.5cm}
\end{figure}

\section{\algName{} Overview}

At a high level, our algorithm makes control decisions based on three main steps: \emph{forward prediction}, \emph{self state estimation}, and \emph{local planning}.
In forward prediction, each agent tries to locally compute an \emph{estimated ideal fleet trajectory} that approximates the ideal fleet trajectory using all the information currently available to itself. As we will show in Section~\ref{sec:theorectical-analysis}, an important property of this step is that despite agents only having access to limited state information about other agents, the differences between these predicted fleet trajectories computed by different agents do not accumulate over time. In self state estimation, each agent uses its locally recorded actuation history and observed past state to predict its future state at $\tau+\actDelay$. Lastly, in local planning, each agent tries to plan its next action using model predictive control, with the goal of minimizing the discrepancy between its predicted future trajectory and the corresponding part in the estimated ideal fleet trajectory. These three steps are illustrated in Figure~\ref{fig:algorithm-illustration}.

Let $\tau$ be the current time step, we now describe each of these three steps in more details:

\textbf{1. Forward Prediction.} Every robot $i$ uses the newest information available to itself (as defined in eq.~\eqref{eq:dcontroller}) to forward-predict the most probable future ideal fleet trajectory $\predi{x}$ by solving the following initial value problem for the time span $\tau-\obsDelay-\comDelay-1 \leq t < \tau+\actDelay+H$, where $H$ is the prediction horizon:
\begin{align}
    \label{eq:forward-prediction}
    \begin{split}
    \predi{x}_j(t+1) &= \sysDyModel_j(\predi{x}_j(t), \predi{u}_j(t), t) + \delta \predi{x}_j \cl
    \predi{u}(t) &= \ccontroller(\predi{x}(t), \predi{z}(t), t) \cl
    \predi{z}_j(t+1) &= \obsDyModel_j(\predi{x}_j(t), \predi{z}_j(t), t) + \delta \predi{z}_j\,,
    \end{split}
\end{align}
for $j\in \{1 \ldots N\}$. Let $\tauInit = \tau-\obsDelay-\comDelay-1$, the above problem is subject to the initial conditions
\begin{equation}
    \label{eq:forward-initial}
    \predi{x}_j(\tauInit|\tau) = \predi{x}_j(\tauInit|\tau-1)\,,
\end{equation}
where the notation $\predi{f}_j(t|\tau)$ means ``the prediction of $\tilde{f}_j(t)$ made by robot $i$ at time $\tau$''. The disturbance terms $\delta \predi{x}$ and $\delta \predi{z}$ are defined to be zero unless the corresponding trajectory information is available to robot $i$, i.e.,
\begin{equation}
    \label{eq:forward-constraints}
    \delta \predi{x}_j, \delta \predi{z}_j = \begin{cases}
        \delta x_j, \delta z_j  & j=i \text{ and } \tauInit \leq t < \tau - \obsDelay \\
        & \text{or } j\neq i \text{ and } t = \tauInit \cl
        0, 0 & \text{otherwise.}
    \end{cases}
\end{equation} 

\textbf{2. Self State Estimation.} Every robot $i$ then estimates its actual state at time $\tau+\actDelay$ using its actuation history $u_i$ by solving the following initial value problem for $\tau-\obsDelay \leq t < \tau + \actDelay$:
\begin{equation}
    \label{eq:self-state-estimation}
    \self{x}_i(t+1) = \sysDyModel(\self{x}_i(t), u_i(t), t)
\end{equation}
subject to the initial condition
\begin{equation}
    \label{eq:self-state-init}
    \self{x}_i(\tau-\obsDelay) = x_i(\tau-\obsDelay)\,.
\end{equation}
This gives the self state estimation $\self{x}_i(\tau+\actDelay)$, which will be used in the next step.

\textbf{3. Local Planning.} Every robot $i$ then uses the predicted $\predi{x}$ and $\predi{u}$ from step 1 as the \emph{most probable approximation} to the ideal fleet trajectory $\ideal{x}$ and $\ideal{u}$ and tries to locally minimize its future regret by solving the following optimization problem for the control time span $\tau + \actDelay \leq t < \tau + \actDelay + H$:
\begin{equation}
    \label{eq:local-planning-obj}
    \operatorname*{Minimize}_{\{\hat{u}_i(t)| t\}} \ \sum_{t=\tau + \actDelay}^{\tau+\actDelay+H-1} \dLoss_i(t)\,,
\end{equation} where
\begin{equation*}
    \dLoss_i(t) = \norm{\hat{x}_i(t)-\predi{x}_i(t)}_{Q_x} + \norm{\hat{u}_i(t)-\predi{u}_i(t)}_{Q_u}\,,
\end{equation*}
subject to the initial condition
\begin{equation}
    \label{eq:local-planning-initial}
    \hat{x}_i(t) = \self{x}_i(t)\,,\ \text{for } t = \tau+\actDelay
\end{equation}
and the dynamics constraints
\begin{equation}
    \label{eq:local-planning-dynamics}
    \hat{x}_i(t+1) = \sysDyModel(\hat{x}_i(t), \hat{u}_i(t), t)\,,
\end{equation}
for $\tau + \actDelay \leq t < \tau+\actDelay+H$.
Robot $i$ then takes the first actuation from the optimal solution as its next actuation, i.e., we have
$ u_i(\tau+\actDelay) = \hat{u}_i(\tau+\actDelay|\tau)\,.$

\textbf{Initialization}
\algName{} needs an initial history of $x$, $z$, and $u$ to start with. As we will see from the analysis in the next section, our algorithm is not sensitive to the errors introduced during initialization as long as the initialization guarantees that all robots make the same forward prediction $\predi{x}$ initially. In our implementation, we assume the initial condition of the entire fleet is available to all robots during initialization, and we simply initialize the history trajectories as constant functions whose value equals the corresponding initial condition.

\section{Theoretical Analysis}
\label{sec:theorectical-analysis}
In this section, we analyze our algorithm's closed-loop behavior in the special case of linear time-invariant (LTI) dynamics and controller. Our main goal is to answer the following two questions: 1) When our model error $\delta x$ and $\delta z$ is zero, how fast does the true fleet trajectory $x$ converge to the ideal fleet trajectory $\ideal{x}$? 2) When $\delta x$ and $\delta z$ is small, does $x$ stays close to $\ideal{x}$? We will answer these questions in Proposition~\ref{thm:main-result} at the end of this section, stated in the form of input-to-state stability. But before we can prove Proposition~\ref{thm:main-result}, we first prove a few lemmas and make additional assumptions.

\begin{lemma}[Inital condition of forward prediction]
    \label{thm:forward-initial}
    Initial condition \eqref{eq:forward-initial} in the forward prediction step initializes $\predi{x}$ to the ideal fleet state $\ideal{x}$ (as shown in Figure~\ref{fig:algorithm-illustration}). i.e., 
    \[
        \predi{x}_j(\tauInit|\tau) = \ideal{x}_j(\tauInit),\ \forall j,\ \forall \tau \geq 0\,.
    \]
\end{lemma}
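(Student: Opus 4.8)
The plan is to prove the claim by induction on $\tau$, using the observation that the ``initial condition'' \eqref{eq:forward-initial} is really a recursion: it copies the left endpoint of robot $i$'s forward prediction from the prediction made one step earlier. Since $\tauInit = \tau-\obsDelay-\comDelay-1$ advances by exactly one when $\tau$ does, \eqref{eq:forward-initial} says that $\predi{x}_j(\tauInit|\tau)$ equals the value obtained by running \emph{one} step of the forward-prediction recursion \eqref{eq:forward-prediction}, as computed at time $\tau-1$, starting from the (even earlier) left endpoint $(\tau-1)-\obsDelay-\comDelay-1 = \tauInit - 1$.

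For the induction I will carry the slightly stronger hypothesis that, at step $\tau$, robot $i$'s forward prediction matches the ideal fleet trajectory at the left endpoint for all of $x$, $z$, and $u$ simultaneously and for every $j$: $\predi{x}_j(\tauInit|\tau)=\ideal{x}_j(\tauInit)$, $\predi{z}_j(\tauInit|\tau)=\ideal{z}_j(\tauInit)$, and $\predi{u}_j(\tauInit|\tau)=\ideal{u}_j(\tauInit)$ (the last follows from the first two via the controller equation, but it is convenient to state explicitly). The base case follows from the initialization assumption: all history trajectories, including the seed forward predictions and the ideal trajectory over the history window, are the common constant initial condition, so the hypothesis holds vacuously there.

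For the inductive step, I substitute the index $t = (\tau-1)-\obsDelay-\comDelay-1$ into the forward-prediction recursion \eqref{eq:forward-prediction} evaluated at $\tau-1$. The state update is $\predi{x}_j(\tauInit|\tau-1)=\sysDyModel_j\big(\predi{x}_j(t|\tau-1),\predi{u}_j(t|\tau-1),t\big)+\delta\predi{x}_j$; by the inductive hypothesis the state and action arguments equal $\ideal{x}_j(t)$ and $\ideal{u}_j(t)$. The crucial point is that at this particular $t$ the injected disturbance $\delta\predi{x}_j$ is the \emph{true} $\delta x_j$ for \emph{every} $j$: in \eqref{eq:forward-constraints}, for $j\neq i$ this is exactly the ``$t=\tauInit$'' case, and for $j=i$ it is the ``$\tauInit\le t<\tau-\obsDelay$'' case, which holds because $\comDelay\ge 1$ gives $t=(\tau-1)-\obsDelay-\comDelay-1<(\tau-1)-\obsDelay$. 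Using $\sysDy_j=\sysDyModel_j+\delta x_j$ together with the ideal dynamics \eqref{eq:ideal-dynamics} then yields $\predi{x}_j(\tauInit|\tau-1)=\sysDy_j(\ideal{x}_j(t),\ideal{u}_j(t),t)=\ideal{x}_j(\tauInit)$; the identical computation on the $z$-equation (and the controller equation for $u$) closes the step once combined with \eqref{eq:forward-initial}.

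The only part that needs care is the disturbance-masking bookkeeping in \eqref{eq:forward-constraints}: one must verify that the single extra dynamics step created when $\tauInit$ advances still lies inside the window where robot $i$ has recorded the true disturbance (for $j=i$) and exactly hits the single index $t=\tauInit$ where other agents' disturbances are injected (for $j\neq i$); this is precisely where $\comDelay,\obsDelay\in\nat^+$ enters. Everything else is a routine unrolling of the defining equations, and notably no LTI structure is required for this lemma.
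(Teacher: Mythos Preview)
Your proposal is correct and follows essentially the same approach as the paper's proof: induction on $\tau$, advance one step of the forward-prediction dynamics \eqref{eq:forward-prediction} from the left endpoint, check via \eqref{eq:forward-constraints} that at this single index the injected disturbances coincide with the true $\delta x_j,\delta z_j$ for all $j$, conclude agreement with the ideal dynamics \eqref{eq:ideal-dynamics}, and then invoke \eqref{eq:forward-initial} to transfer the result to the next $\tau$. Your explicit inclusion of $\predi{z}$ and $\predi{u}$ in the induction hypothesis is a bit more careful than the paper (which leaves this implicit when saying the two dynamics ``become identical''), and your remark that no LTI structure is needed is also correct.
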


\begin{proof}
    We prove this by induction. The base case at $\tau = 0$, $\predi{x}_j(\tauInit|\tau) = \ideal{x}_j(\tauInit)$ holds because of the initialization step. For the inductive case, assume $\predi{x}_j(\tauInit|\tau) = \ideal{x}_j(\tauInit)$ at time $\tau$, we aim to show that $\predi{x}_j(\tauInit+1|\tau+1) = \ideal{x}_j(\tauInit+1)$. Compare the dynamics \eqref{eq:ideal-dynamics} of $\ideal{x}$  with the dynamics \eqref{eq:forward-prediction} of $\predi{x}$ , we see that they become identical if $\delta \predi{x}(t) = \delta x(t)$ and $\delta \predi{z}(t) = \delta z(t)$. And from the history constraints \eqref{eq:forward-constraints}, we see that this is indeed the case at $t=\tauInit$; hence, we have $\predi{x}_j(\tauInit+1|\tau) = \ideal{x}_j(\tauInit+1)$. Finally, apply the initial condition~\eqref{eq:forward-initial}, we have $\predi{x}_j(\tauInit+1|\tau+1) = \ideal{x}_j(\tauInit+1)$\,.
\end{proof}

Next, we introduce the following LTI assumptions about the system dynamics and controllers.\footnote{For mildly nonlinear systems, the following assumptions can hold temporarily by linearizing the system behavior around the current operating point. }
\begin{assump}
    \label{assump:x-dynamics}
    For $t \geq 0$, all robots have the LTI dynamics of the form
    \begin{align}
        \label{eq:sys_dynamics-linear}
        \hat{\sysDy}_i(x_i, u_i, t) &= A_i x_i + B_i u_i + \hat{w}_i(t)\cl
        \sysDy_i(x_i, u_i, t) &= A_i x_i + B_i u_i + \ideal{w}_i(t)\,,
    \end{align}
    in which $A_i$ and $B_i$ are constant matrices, and the time-dependent terms satisfy $\ideal{w}_i(t) - \hat{w}_i(t) = \delta x(t)$. We also assume that norm $\vert \lambda_k \vert \leq 1$ for all eigenvalues $\lambda_k$ of $A_i$, i.e., the system dynamics is not exponentially unstable.
\end{assump}

\begin{assump}
    \label{assump:z-dynamics}
    For $t \geq 0$, the observation dynamics have LTI dynamics of the form
    \begin{align}
        \label{eq:obs_dynamics-linear}
        \obsDyModel(z_i, t) = C_i z_i + \hat{\mu}_i(t)\cl
        \obsDy(z_i, t) = C_i z_i + \ideal{\mu}_i(t)\,,
    \end{align}
    in which $C_i$ has all its eigenvalues $\lambda_k$'s norm $\vert \lambda_k \vert \leq 1$, and $\ideal{\mu}(t) - \hat{\mu}_i(t) = \delta z(t)$.
\end{assump}

\begin{assump}
    \label{assump:controller}
    For $t\geq 0$, the centralized controller $\ccontroller$ is also LTI w.r.t. $x$ and $z$  and has the form
    \begin{equation*}
        \ccontroller(x,z,t) = -K_x x + K_z z + v(t)\,,
    \end{equation*}
    and $K_x$ stabilizes the closed-loop dynamics, i.e., $\vert \lambda_k \vert < 1$ for all eigenvalues $\lambda_k$ of the matrix $A-B K_x$. Here, $A=\diag{A_i |\, \forall i}$ and $B=\diag{B_i|\, \forall i}$ are the block-diagonal matrices of the overall system.
\end{assump}

We next prove the following error bounds of the forward prediction and self state estimation step.

\begin{lemma}[Self estimation error]
    \label{thm:self-estimation}
    Let $\serror{x}_i = \self{x}_i - x_i$, $\error{\hat{w}}_i = \hat{w}_i - \ideal{w}_i$. At any given time $\tau \geq 0$, we have 
    \begin{equation*}
    \norm{\serror{x}_i(\tau+\actDelay)} \leq \beta_i(\stateDelay+\actDelay)\, \maxnorm{\error{\hat{w}}_i}\,,
    \end{equation*}    
    where $\beta_i$ is a positive definite polynomial of order $m_i$\footnote{when $A_i$ is stable, $\beta_i$ reduces to a constant.}, $m_i$ depends on the number of unit-norm eigenvalues of $A_i$, and 
    \[
        \maxnorm{\error{\hat{w}}_i} = \max_{-\stateDelay \leq t \leq \actDelay}{\norm{\error{\hat{w}}_i(\tau+t)}}
    \] is the maximal norm of $\error{\hat{w}}_i$ between $\tau-\stateDelay$ and $\tau+\actDelay$\,.
\end{lemma}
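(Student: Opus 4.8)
The plan is to turn the self estimation error into a linear recursion driven only by the dynamics mismatch $\error{\hat{w}}_i$, solve that recursion in closed form, and then control the resulting sum of matrix powers using the spectral assumption on $A_i$. First I would subtract the self‑estimation update~\eqref{eq:self-state-estimation} from the true plant dynamics $x_i(t+1) = \sysDy_i(x_i(t),u_i(t),t)$. Under Assumption~\ref{assump:x-dynamics} both share the same constant matrices $A_i,B_i$, so the term $B_i u_i(t)$ cancels and
\[
    \serror{x}_i(t+1) = A_i\,\serror{x}_i(t) + \bigl(\hat{w}_i(t)-\ideal{w}_i(t)\bigr) = A_i\,\serror{x}_i(t) + \error{\hat{w}}_i(t)
\]
for $\tau-\stateDelay \le t < \tau+\actDelay$, with $\serror{x}_i(\tau-\stateDelay)=0$ by the initial condition~\eqref{eq:self-state-init}. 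Unrolling over the $\stateDelay+\actDelay$ steps from $\tau-\stateDelay$ to $\tau+\actDelay$ gives $\serror{x}_i(\tau+\actDelay) = \sum_{k=0}^{\stateDelay+\actDelay-1} A_i^{\,k}\,\error{\hat{w}}_i(\tau+\actDelay-1-k)$, and since each $\error{\hat{w}}_i(\tau+\actDelay-1-k)$ has argument in $[\tau-\stateDelay,\tau+\actDelay-1]$, the triangle inequality and submultiplicativity of the operator norm yield
\[
    \norm{\serror{x}_i(\tau+\actDelay)} \le \Bigl(\textstyle\sum_{k=0}^{\stateDelay+\actDelay-1}\norm{A_i^{\,k}}\Bigr)\,\maxnorm{\error{\hat{w}}_i}\,.
\]

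It then remains to show $\sum_{k=0}^{n-1}\norm{A_i^{\,k}} \le \beta_i(n)$ with $n=\stateDelay+\actDelay$ for a polynomial $\beta_i$ of order $m_i$. Here the assumption $|\lambda|\le 1$ for every eigenvalue $\lambda$ of $A_i$ is essential. Writing $A_i$ in Jordan form $A_i = SJS^{-1}$ and expanding each block $(\lambda I + N)^k = \sum_j \binom{k}{j}\lambda^{k-j}N^j$, the entries of $J^k$ are bounded by $\binom{k}{j}|\lambda|^{k-j} \le \binom{k}{j} \le (1+k)^{b-1}$, where $b$ is the block size; hence $\norm{A_i^{\,k}} \le c_i(1+k)^{m_i-1}$ for a constant $c_i$ depending only on $A_i$ and $m_i$ the size of the largest Jordan block attached to a unit‑norm eigenvalue (in particular at most the number of such eigenvalues). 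Summing, $\sum_{k=0}^{n-1}\norm{A_i^{\,k}} \le c_i\sum_{k=0}^{n-1}(1+k)^{m_i-1} \le c_i\,n^{m_i}$, so we may take $\beta_i(n)=c_i n^{m_i}$ (or a sharper Faulhaber bound with non‑negative coefficients), a polynomial of order $m_i$. When $A_i$ has no unit‑norm eigenvalue, $m_i=0$ and $\sum_{k=0}^{\infty}\norm{A_i^{\,k}}$ converges, so $\beta_i$ collapses to a constant, as in the footnote.

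The only genuinely nontrivial step is the last one — the polynomial‑in‑$k$ growth bound on $\norm{A_i^{\,k}}$ — and even that is standard once the Jordan decomposition is invoked; everything else is routine manipulation of the linear error recursion. The one subtlety worth flagging is that $m_i$ is really the largest Jordan block dimension at the unit circle rather than a purely algebraic count, but overestimating it by the number of unit‑norm eigenvalues as in the statement is harmless for the bound.
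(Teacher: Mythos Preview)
Your proposal is correct and follows essentially the same route as the paper: derive the linear error recursion $\serror{x}_i(t+1)=A_i\serror{x}_i(t)+\error{\hat w}_i(t)$ with zero initial condition, unroll it to a finite sum of $A_i^k$ acting on the disturbance, and then use the eigenvalue assumption on $A_i$ to bound $\norm{A_i^k}$ polynomially in $k$ before summing. Your treatment is in fact slightly more detailed than the paper's (it explicitly invokes the Jordan form to justify the polynomial bound, and your closed-form sum has the correct index shift $k=0,\dots,\stateDelay+\actDelay-1$), and your closing remark about $m_i$ really being a Jordan-block size rather than a raw eigenvalue count is a fair observation.
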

\begin{proof}
    Using \eqref{eq:open-dynamics}, \eqref{eq:self-state-estimation}, we can write down the error dynamics as
    \begin{equation}
        \label{eq:self-error-dynamics}
        \serror{x}_i(t+1) = A_i \serror{x}_i(t) + \error{\hat{w}}_i(t)
    \end{equation}
    with the initial condition (due to \eqref{eq:self-state-init})
    \begin{equation*}
        \serror{x}_i(\tau-\obsDelay) = 0\,.
    \end{equation*}
    Solving this linear system gives us the value of $\serror{x}$ at $\tau + \actDelay$:
    \begin{equation*}
        \serror{x}_i(\tau+\actDelay) = \sum_{t=1}^{\stateDelay+\actDelay}{(A_i)^t \error{\hat{w}}_i(\tau+\actDelay-t)}\,.
    \end{equation*}
    Since $A_i$ contains no eigenvalues whose norm is greater than 1, (Assumption~\ref{assump:x-dynamics}), we can bound $\norm{(A_i)^t}$ by a polynomial of order $m_i'$ on $t$, where $m_i'$ is the number of unit-norm eigenvalues of $A_i$ whose algebraic multiplicity $\neq$ geometric multiplicity. Also bound $\norm{\error{\hat{w}}_i(t)}$ by $\maxnorm{\error{\hat{w}}_i}$, we arrive at the conclusion by setting $m_i=m_i'+1$.
\end{proof}

Similarly, we now provide a bound for the prediction error $\perrori{x}(t|\tau) = \predi{x}(t|\tau) - \ideal{x}(t)$ and $\perrori{z}(t|\tau) = \predi{z}(t|\tau) - \ideal{z}(t)$\,. 
\begin{lemma}[Forward prediction error]
    \label{thm:forward-error}
    At any time $\tau \geq 0$, we have
    \begin{align}
        \label{eq:forward-z-bound}
        \norm{\perrori{z}(\tau+\actDelay|\tau)}
        & \leq \gamma_i(\totalDelay) \maxnorm{\error{\hat{\mu}}}\,, \\
        \label{eq:forward-x-bound}
        \norm{\perrori{x}(\tau+\actDelay|\tau)}
        & \leq a_i \maxnorm{\error{\hat{\mu}}} + b \maxnorm{\error{\hat{w}}}\,.
    \end{align}
    where $\totalDelay = \obsDelay + \actDelay + \comDelay$. $\gamma_i$ is a polynomial of order $n_i$ that depends on the eigenvalues of $C_i$'s, and $a_i$, $b$ are constants that depends on $A,B,K_x,K_z,\text{and }C$. $\error{\hat{\mu}} = \hat{\mu} - \ideal{\mu}$ is the prediction model error, and $\maxnorm{\error{\hat{\mu}}}$ and $\maxnorm{\error{\hat{w}}}$ are the corresponding maximal error norms between $\tau-\obsDelay-\comDelay$ and $\tau + \actDelay$.
\end{lemma}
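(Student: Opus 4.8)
The plan is to follow the template of Lemma~\ref{thm:self-estimation}: specialize the forward-prediction recursion \eqref{eq:forward-prediction} and the ideal dynamics \eqref{eq:ideal-dynamics} to the LTI setting of Assumptions~\ref{assump:x-dynamics}--\ref{assump:controller}, subtract the two to obtain linear recursions for the prediction errors, and solve those recursions explicitly. The one new feature relative to Lemma~\ref{thm:self-estimation} is that the state and observation errors are coupled through the controller $\ccontroller$; however, under Assumption~\ref{assump:z-dynamics} the observation model is autonomous, so the recursion for $\perrori{z}$ is self-contained and I would bound it first, then substitute that bound into the $\perrori{x}$ recursion.

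\emph{Observation error.} Subtracting the ideal update $\ideal{z}_j(t+1)=C_j\ideal{z}_j(t)+\ideal{\mu}_j(t)$ from the predicted one $\predi{z}_j(t+1)=C_j\predi{z}_j(t)+\hat{\mu}_j(t)+\delta\predi{z}_j(t)$ gives
\[
   \perrori{z}_j(t+1|\tau) = C_j\,\perrori{z}_j(t|\tau) + \bigl(\error{\hat{\mu}}_j(t) + \delta\predi{z}_j(t)\bigr).
\]
The measurement identity for $\delta z$ combined with Assumption~\ref{assump:z-dynamics} gives $\delta z_j(t)=\ideal{\mu}_j(t)-\hat{\mu}_j(t)=-\error{\hat{\mu}}_j(t)$, so by the case split \eqref{eq:forward-constraints} the forcing term is $0$ on every step where $\delta\predi{z}_j$ has been filled in with the measured disturbance and equals $\error{\hat{\mu}}_j(t)$ on the remaining ``uninformed'' steps; that window has length at most $\totalDelay$ (attained when $j\neq i$, where the only informed step is $t=\tauInit$). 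Running the induction of Lemma~\ref{thm:forward-initial}, extended jointly to the $z$ component, shows $\perrori{z}_j(\tauInit|\tau)=0$, so unrolling from $\tauInit$ to $\tau+\actDelay$ exhibits $\perrori{z}_j(\tau+\actDelay|\tau)$ as a sum of at most $\totalDelay$ terms $C_j^{k}\error{\hat{\mu}}_j(\cdot)$. Bounding $\norm{C_j^{k}}$ by a polynomial in $k$ — legitimate since $C_j$ has no eigenvalue of modulus greater than $1$, with the degree governed by the Jordan structure of $C_j$ at its unit-modulus eigenvalues exactly as in Lemma~\ref{thm:self-estimation} — and each $\norm{\error{\hat{\mu}}_j(\cdot)}$ by $\maxnorm{\error{\hat{\mu}}}$ produces \eqref{eq:forward-z-bound} with $\gamma_i$ of order $n_i$.

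\emph{State error.} Subtracting \eqref{eq:ideal-dynamics} from \eqref{eq:forward-prediction}, inserting $\predi{u}(t)-\ideal{u}(t) = -K_x\perrori{x}(t|\tau) + K_z\perrori{z}(t|\tau)$ from Assumption~\ref{assump:controller}, and stacking the agents gives
\[
   \perrori{x}(t+1|\tau) = (A-BK_x)\,\perrori{x}(t|\tau) + BK_z\,\perrori{z}(t|\tau) + \xi(t),
\]
where, using $\delta x_j(t)=\ideal{w}_j(t)-\hat{w}_j(t)$ together with \eqref{eq:forward-constraints}, the component $\xi_j(t)$ is $0$ on informed steps and $\error{\hat{w}}_j(t)$ otherwise (again nonzero only on a window of length at most $\totalDelay$), and $\perrori{x}(\tauInit|\tau)=0$ by Lemma~\ref{thm:forward-initial}. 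Unrolling to $\tau+\actDelay$ writes $\perrori{x}(\tau+\actDelay|\tau)$ as a discrete convolution of the matrix powers $(A-BK_x)^{k}$ against $BK_z\perrori{z}(\cdot|\tau)+\xi(\cdot)$. The decisive point is that $A-BK_x$ is Schur stable (Assumption~\ref{assump:controller}), so $\norm{(A-BK_x)^{k}}\le c\,\rho^{k}$ for some $\rho<1$ and $\sum_{k\geq 0}\norm{(A-BK_x)^{k}}\le c/(1-\rho)$ is a finite constant that does not involve any of the delays. Bounding the $\xi$-part of the convolution by this geometric sum times $\maxnorm{\error{\hat{w}}}$ yields the term $b\maxnorm{\error{\hat{w}}}$ with $b=c/(1-\rho)$ genuinely delay-free, and bounding the $\perrori{z}$-part by the geometric sum times $\norm{BK_z}$ times \eqref{eq:forward-z-bound} yields $a_i\maxnorm{\error{\hat{\mu}}}$ with $a_i$ assembled from $c/(1-\rho)$, $\norm{BK_z}$ and $\gamma_i$; this establishes \eqref{eq:forward-x-bound}.

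The main obstacle I anticipate is the bookkeeping that turns the case split \eqref{eq:forward-constraints} into the exact ``uninformed windows'' for the forcing terms $\error{\hat{\mu}}_j$ and $\error{\hat{w}}_j$ — the self case $j=i$ and the cross case $j\neq i$ have different windows, and one must check that the length bound $\totalDelay$ and the ranges over which the maxima $\maxnorm{\error{\hat{\mu}}}$ and $\maxnorm{\error{\hat{w}}}$ are taken stay consistent — together with organizing the double sum in the state step so that the Schur decay of $(A-BK_x)^{k}$ factors out cleanly and it is transparent which remaining factors are delay-independent. The Jordan-form estimates for powers of $C_j$ (and of $A_i$, for the self-state analogue) at unit-modulus eigenvalues and the verification of the cancellations $\delta z_j=-\error{\hat{\mu}}_j$ and $\delta x_j=-\error{\hat{w}}_j$ under the standing no-observation-noise assumption are routine.
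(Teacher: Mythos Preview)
Your proposal is correct and follows essentially the same route as the paper: derive the linear error recursions for $\perrori{z}$ and $\perrori{x}$ by subtracting \eqref{eq:ideal-dynamics} from \eqref{eq:forward-prediction} under Assumptions~\ref{assump:x-dynamics}--\ref{assump:controller}, use the history constraints \eqref{eq:forward-constraints} to zero out the forcing on informed steps (the paper packages this as shifted initial conditions $\perrori{z}_j=0$ at $\tauInit$ or $\tauInit+\comDelay$ depending on $j$, which is equivalent to your cancellation $\delta z_j=-\error{\hat{\mu}}_j$), bound $\perrori{z}$ polynomially via the eigenvalue condition on $C_j$, and then bound $\perrori{x}$ using the Schur stability of $A-BK_x$. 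Your write-up is in fact more explicit than the paper's about why the constants $a_i,b$ come out delay-independent through the geometric sum $\sum_k\norm{(A-BK_x)^k}$.
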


\begin{proof}
    Take the difference of the dynamics of $\predi{z}_i$ and $\ideal{z}_i$ using \eqref{eq:obs_dynamics-linear}, we can write down the error dynamics of $\error{\predi{z}_i}(t)$ as
    \begin{equation*}
        \error{\predi{z}_j}(t+1) = C_j \error{\predi{z}_j}(t) + \error{\mu_j}(t)
    \end{equation*}
    with the initial condition (which can be obtained from the history constraints \eqref{eq:forward-constraints})
    \[
        \begin{cases}
            \error{\predi{z}_i}(\tauInit+\obsDelay) = 0\,,  & j = i\,\\
            \error{\predi{z}_i}(\tauInit) = 0\,,   & j\neq i\,.
        \end{cases}
    \]
    And since $C_i$ has no eigenvalues whose norm is greater than 1, similar to the argument in the proof of Lemma~\ref{thm:self-estimation}, $\error{\predi{z}_i}$ only grows at most at polynomial speed, hence we have
    \begin{equation*}
        \norm{\perrori{z}(\tau+\actDelay)}
        \leq \gamma_i'(\totalDelay-\stateDelay) \maxnorm{\error{\hat{\mu}}_i} + \sum_{j \neq i} \gamma_j'(\totalDelay) \maxnorm{\error{\hat{\mu}}_j}\,.
    \end{equation*}
    We can then bound the right-hand-side with $\gamma_i(\totalDelay) \maxnorm{\error{\hat{\mu}}}$ and arrive at \eqref{eq:forward-z-bound}.
    
    Now using the linear form of $\ccontroller$ given by Assumption~\ref{assump:controller}, we can also write down the dynamics of $\perrori{x}$ as
    \begin{equation*}
    \begin{split}
    \perrori{x}(t+1) = &\, (A - B K_x) \perrori{x}(t) \\
     & + B K_z \error{\predi{z}}(t) + \error{\predi{w}_j}(t)
    \end{split}
    \end{equation*}
    with the initial condition (which holds by Lemma~\ref{thm:forward-initial})
    \[ 
        \perrori{x}(\tauInit) = 0\,.
    \]
   Since $\vert \lambda_k \vert < 1$ for all eigenvalues $\lambda_k$ of $A-B K_x$ (Assumption~\ref{assump:controller}), and $\error{\predi{z}}$ grows at most at polynomial speed, we can bound $\perrori{x}(\tau + \actDelay | \tau)$ using \eqref{eq:forward-x-bound}.
\end{proof}

To simplify the analysis of local planning, we also assume that the prediction horizon $H$ is very long such that the resulting optimal action is linear feedback.

\begin{assump}
    The prediction horizon $H \rightarrow \infty$. 
\end{assump}

\begin{lemma}[Local planning provides linear feedback]
    \label{thm:local-planning}
    The optimal actuation given by the local planning step has the form
    \begin{equation*}
        u_i(t) = \predi{u}_i(t|t-\actDelay) - K^L_i (\self{x}_i(t) - \predi{x}_i(t|t-\actDelay))\,,
    \end{equation*} 
    where $K^L_i$ is some constant matrix that stabilizes the system.
\end{lemma}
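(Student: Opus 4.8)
The plan is to pass to error coordinates and recognize the local planning problem as a standard time-invariant infinite-horizon LQR. Fix the current step $\tau$ and let $t$ range over the control span $\tau+\actDelay \le t < \tau+\actDelay+H$. Define $e_i(t) = \hat{x}_i(t) - \predi{x}_i(t|\tau)$ and $d_i(t) = \hat{u}_i(t) - \predi{u}_i(t|\tau)$. The key observation is that over the control span the forward-prediction disturbance injection $\delta\predi{x}_i$ is zero by \eqref{eq:forward-constraints}, so $\predi{x}_i$ obeys the \emph{same} LTI recursion $\predi{x}_i(t+1) = A_i\predi{x}_i(t) + B_i\predi{u}_i(t|\tau) + \hat{w}_i(t)$ (Assumption~\ref{assump:x-dynamics}) as the planned trajectory \eqref{eq:local-planning-dynamics}. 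Subtracting the two recursions, the affine term $\hat{w}_i(t)$ cancels and we obtain the homogeneous error dynamics $e_i(t+1) = A_i e_i(t) + B_i d_i(t)$, with initial condition $e_i(\tau+\actDelay) = \self{x}_i(\tau+\actDelay) - \predi{x}_i(\tau+\actDelay|\tau)$ by \eqref{eq:local-planning-initial}. Here $\predi{x}_i(\cdot|\tau)$ and $\predi{u}_i(\cdot|\tau)$ are quantities frozen at time $\tau$, so they serve merely as a known affine reference and do not enter the dynamics of the optimization variables.

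Next I would rewrite the objective \eqref{eq:local-planning-obj} in these coordinates: the running cost is exactly $\dLoss_i(t) = e_i(t)^\tr Q_x e_i(t) + d_i(t)^\tr Q_u d_i(t)$. With the new assumption $H \to \infty$, minimizing $\sum_{t\ge\tau+\actDelay}\dLoss_i(t)$ subject to $e_i(t+1)=A_ie_i(t)+B_id_i(t)$ is verbatim the discrete-time infinite-horizon LQR problem with weight matrices $(Q_x, Q_u)$ and system matrices $(A_i,B_i)$; this problem is time-invariant and hence independent of $\tau$.

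Then I would invoke the classical LQR solution. Since $Q_u \succ 0$, $Q_x \succeq 0$, and every uncontrollable mode of $(A_i,B_i)$ has eigenvalue of norm at most $1$ by Assumption~\ref{assump:x-dynamics} (so $(A_i,B_i)$ is stabilizable), the discrete algebraic Riccati equation $P_i = A_i^\tr P_i A_i - A_i^\tr P_i B_i(Q_u + B_i^\tr P_i B_i)^{-1}B_i^\tr P_i A_i + Q_x$ admits a unique stabilizing positive semidefinite solution $P_i$, and the optimal policy is the static feedback $d_i(t) = -K^L_i\, e_i(t)$ with $K^L_i = (Q_u + B_i^\tr P_i B_i)^{-1}B_i^\tr P_i A_i$, which renders $A_i - B_iK^L_i$ Schur-stable. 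Translating back through $d_i = \hat{u}_i - \predi{u}_i$ and $e_i = \hat{x}_i - \predi{x}_i$ gives $\hat{u}_i(t|\tau) = \predi{u}_i(t|\tau) - K^L_i(\hat{x}_i(t|\tau) - \predi{x}_i(t|\tau))$; evaluating at the first control step $t=\tau+\actDelay$, where $\hat{x}_i(\tau+\actDelay)=\self{x}_i(\tau+\actDelay)$ by \eqref{eq:local-planning-initial}, and writing $\tau = t-\actDelay$ for the applied action $u_i(\tau+\actDelay)=\hat{u}_i(\tau+\actDelay|\tau)$, yields exactly the claimed form.

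The main obstacle I anticipate is not the LQR reduction itself but pinning down the precise sense in which $K^L_i$ ``stabilizes the system'': with $Q_x$ only positive semidefinite and $A_i$ allowed to carry eigenvalues on the unit circle, closed-loop asymptotic stability needs detectability of $(Q_x^{1/2},A_i)$, which is not among the paper's standing assumptions; one must either add it, strengthen $Q_x$ to positive definite, or accept Lyapunov (marginal) stability of the residual unobservable unit-circle modes and then track how that weaker notion feeds into Proposition~\ref{thm:main-result}. A secondary, mechanical point is to be careful that the cancellation of $\hat{w}_i$ relies on using the \emph{same} model $\sysDyModel_i$ in both \eqref{eq:forward-prediction} and \eqref{eq:local-planning-dynamics}, and that $Q_x,Q_u$ here should be read as the blocks of the global weight matrices acting on agent $i$'s coordinates.
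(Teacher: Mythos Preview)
Your approach is identical to the paper's: pass to error coordinates $e_i=\hat{x}_i-\predi{x}_i$, $d_i=\hat{u}_i-\predi{u}_i$, note that on the control span the disturbance injections vanish so the affine terms cancel and $e_i(t+1)=A_ie_i(t)+B_id_i(t)$, recognize the infinite-horizon LQR with weights $(Q_x,Q_u)$, and read off the first step of the feedback law using \eqref{eq:local-planning-initial}. One small slip worth flagging: uncontrollable modes with eigenvalue norm \emph{at most} $1$ do not imply stabilizability of $(A_i,B_i)$ (you need strict inequality on the uncontrollable part), so your parenthetical inference is off---but the paper simply asserts that $K^L_i$ stabilizes the system without checking any hypotheses, and you already correctly identify this circle of stabilizability/detectability issues as a gap in your final paragraph.
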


\begin{proof}
    Take the difference between \eqref{eq:local-planning-dynamics} and \eqref{eq:forward-prediction}, and notice that there are no history constraints during $t\geq \tau+\actDelay$, we obtain
    \begin{align*}
    \hat{x}_i(t+1) - \predi{x}_i(t+1) = &\ A_i\,(\hat{x}_i(t) - \predi{x}_i(t|\tau)) \\
    & + B_i\,(\hat{u}_i(t)-\predi{u}_i(t|\tau))
    \end{align*}  
    for $t \geq \tau + \actDelay$, which---when combined with the objective \eqref{eq:local-planning-obj} and the assumption $H\approx \infty\,$---matches the form of a linear quadratic regulator (LQR) problem. Hence, the optimal solution is a linear feedback law given by
    \begin{equation*}
        \hat{u}_i(t) - \predi{u}_i(t|\tau) = - K^L_i (\hat{x}_i(t|\tau) - \predi{x}_i(t|\tau))\,,
    \end{equation*}
    where the gain $K^L_i$ stabilizes the system and can be obtained by solving the discrete-time algebraic Riccati equation~\cite{lancaster1995algebraic}. 

    Take $t=\tau+\actDelay$ in the above, and notice that $\hat{x}_i(\tau+\actDelay|\tau) = \self{x}_i(\tau+\actDelay)$ (equation~\eqref{eq:local-planning-initial}), we arrive at the conclusion.
\end{proof}

We are now ready to prove our main result. 
\begin{thm}[closed-loop stability]
    \label{thm:main-result}
    Let $\error{x} = x - \ideal{x}$ be the distance between the actual fleet trajectory and the ideal fleet trajectory, we have the following bound
    \begin{equation}
        \label{eq:closed-loop-bound}
        \norm{\error{x}(t)} \leq c_1 e^{-\lambda t} \norm{\error{x}(0)} + d_1 \maxnorm{\error{\hat{w}}} + d_2 \maxnorm{\error{\hat{\mu}}}\,,
    \end{equation}
    where $c_1, \lambda, d_1, \text{and } d_2$ are all constants independent of the delays, and the maximal norm $\maxnorm{\cdot}$ is defined on the interval $0 \leq t < \infty$.
\end{thm}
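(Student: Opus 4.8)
The plan is to derive a closed-loop recursion for the per-agent error $\error{x}_i(t) = x_i(t) - \ideal{x}_i(t)$, show that its homogeneous part is governed by the Schur-stable matrix $A_i - B_i K^L_i$ from Lemma~\ref{thm:local-planning}, and that its forcing term is controlled entirely by the self-estimation and forward-prediction errors already bounded in Lemmas~\ref{thm:self-estimation} and~\ref{thm:forward-error}. Subtracting the ideal dynamics~\eqref{eq:ideal-dynamics} from the true dynamics, using the LTI form of Assumption~\ref{assump:x-dynamics} (for which the common term $\ideal{w}_i$ cancels), gives $\error{x}_i(t+1) = A_i\,\error{x}_i(t) + B_i\big(u_i(t) - \ideal{u}_i(t)\big)$, so everything reduces to expressing the actuation discrepancy $u_i(t) - \ideal{u}_i(t)$ in terms of the already-analyzed error quantities.

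First I would substitute $u_i(t)$ by the linear-feedback expression of Lemma~\ref{thm:local-planning} (with $\tau = t - \actDelay$) and rewrite each of its pieces relative to $\ideal{x}$: writing $\self{x}_i(t) = x_i(t) + \serror{x}_i(t)$ and $\predi{x}_i(t|t-\actDelay) = \ideal{x}_i(t) + [\perrori{x}(t|t-\actDelay)]_i$ turns the feedback term into $-K^L_i\big(\error{x}_i(t) + \serror{x}_i(t) - [\perrori{x}(t|t-\actDelay)]_i\big)$, and expanding both $\predi{u}_i(t|t-\actDelay)$ and $\ideal{u}_i(t)$ with the LTI controller form of Assumption~\ref{assump:controller} gives $\predi{u}_i(t|t-\actDelay) - \ideal{u}_i(t) = -K_{x,i}\perrori{x}(t|t-\actDelay) + K_{z,i}\perrori{z}(t|t-\actDelay)$, where $K_{x,i}, K_{z,i}$ are the $i$-th block rows of $K_x, K_z$ and the reference term $v(t)$ cancels. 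Collecting terms yields the clean recursion
\[
\error{x}_i(t+1) = (A_i - B_i K^L_i)\,\error{x}_i(t) + B_i\,\eta_i(t),
\]
with $\eta_i(t) = -K_{x,i}\perrori{x}(t|t-\actDelay) + K_{z,i}\perrori{z}(t|t-\actDelay) + K^L_i[\perrori{x}(t|t-\actDelay)]_i - K^L_i\serror{x}_i(t)$ a fixed linear combination of quantities bounded in Lemmas~\ref{thm:self-estimation} and~\ref{thm:forward-error}.

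Next I would bound the forcing term uniformly in $t$. Applying Lemma~\ref{thm:forward-error} to $\perrori{x}$ and $\perrori{z}$ and Lemma~\ref{thm:self-estimation} to $\serror{x}_i$ gives $\norm{\eta_i(t)} \leq d_1' \maxnorm{\error{\hat{w}}} + d_2' \maxnorm{\error{\hat{\mu}}}$ for constants $d_1', d_2'$ built from $\norm{K_x}, \norm{K_z}, \norm{K^L_i}$ and the coefficients $a_i, b, \beta_i, \gamma_i$ of those lemmas; in the Schur-stable regime the polynomials $\beta_i, \gamma_i$ collapse to constants (per the footnotes to those lemmas), so $d_1', d_2'$ carry no delay dependence. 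Then, since $K^L_i$ strictly stabilizes $A_i - B_i K^L_i$ (Lemma~\ref{thm:local-planning}), there are delay-independent $c, \lambda > 0$ with $\norm{(A_i - B_i K^L_i)^k} \leq c\,e^{-\lambda k}$; unrolling the recursion,
\[
\error{x}_i(t) = (A_i - B_i K^L_i)^t \error{x}_i(0) + \sum_{k=0}^{t-1} (A_i - B_i K^L_i)^{t-1-k} B_i\, \eta_i(k),
\]
taking norms, and bounding the convolution sum by the geometric series $\sum_{k\geq 0} c\,e^{-\lambda k}\norm{B_i} = c\norm{B_i}/(1 - e^{-\lambda})$ gives a per-agent bound of exactly the claimed shape. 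Stacking the $N$ agent bounds and using equivalence of the block and Euclidean norms on the concatenated vectors produces~\eqref{eq:closed-loop-bound} with $c_1, \lambda, d_1, d_2$ independent of the delays.

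I expect the main obstacle to be the bookkeeping in the second step rather than any deep estimate: one must carefully track the three nested levels of quantities --- the true state $x_i$, agent $i$'s self-estimate $\self{x}_i$, and agent $i$'s forward prediction $\predi{x}$ of the whole fleet --- so that the feedback term of Lemma~\ref{thm:local-planning} decomposes cleanly into precisely $\error{x}_i$ (which merges into the stable closed-loop matrix) plus terms already controlled by earlier lemmas, with no residual cross-agent coupling surviving. A secondary point to handle with care is verifying that the delay-dependent polynomial factors of Lemmas~\ref{thm:self-estimation}--\ref{thm:forward-error} never multiply a delay-dependent exponential, which is what makes the final bound's constants genuinely delay-free; a brief argument is also needed to cover the finite transient $0 \le t < \actDelay$ before Lemma~\ref{thm:local-planning} applies.
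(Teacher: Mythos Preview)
Your proposal is correct and follows essentially the same route as the paper: derive the error recursion $\error{x}(t+1)=(A-BK^L)\error{x}(t)+B\epsilon(t)$ by subtracting the ideal from the actual dynamics and expanding $u-\ideal{u}$ via Lemma~\ref{thm:local-planning} and Assumption~\ref{assump:controller}, then bound the forcing term using Lemmas~\ref{thm:self-estimation} and~\ref{thm:forward-error} and invoke Schur-stability of $A-BK^L$. The only cosmetic difference is that you carry out the computation per agent (which is legitimate because $A,B,K^L$ are block-diagonal) while the paper works with the stacked system directly; your extra remarks about the transient $t<\actDelay$ and about when the delay-polynomials $\beta_i,\gamma_i$ collapse to constants are refinements that the paper glosses over.
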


\begin{proof}
    We have the actual closed-loop dynamics
    \begin{equation*}
    x(t+1) = A x(t) + B u(t) + \ideal{w}(t)\,,
    \end{equation*}
    where $u(t)$ is given by Lemma~\ref{thm:local-planning}.
    We also have the ideal dynamics
    \begin{equation*}
    \ideal{x}(t+1) = A \ideal{x}(t) + B \ideal{u}(t) + \ideal{w}(t)\,.
    \end{equation*}
    Take the difference, we obtain
    \begin{align*}
    \error{x}(t+1) & = A \error{x}(t) + B \error{u}(t)\,.
    \end{align*}
    Expand $u$ and $\ideal{u}$'s definitions, and use the notation $\preda{f}$ to denote $\colVec{\pred{f}{j}_j |\ \forall j}$, we have
    \begin{align*}
    \error{u} & = \preda{u} - \ideal{u} - K^L(\self{x}(t) - \preda{x}) \\
    & = \colVec{(K_z \pred{z}{j})_j |\ \forall j} - K_z \ideal{z} - K_x (\preda{x} - \ideal{x}) \\
    & \quad\ - K^L (x - \ideal{x} + \self{x} - x + \ideal{x} - \preda{x})\\
    & = \colVec{(K_z \perror{z}{j})_j|\ \forall j} - K_x \perrora{x}\\ 
    & \quad\ - K^L (\error{x} + \serror{x} - \perrora{x})\,,
    \end{align*}
    in which the time arguments are omitted for brevity.
    Substitute the above into the previous equation, we obtain
    \begin{equation}
        \label{eq:closed-loop-error-dynamics}
        \error{x}(t+1) = (A - B K^L) \error{x}(t) + B \epsilon(t)\,,
    \end{equation}
    where the disturbance term
    \begin{equation*}
        \epsilon(t) = \colVec{(K_z \perror{z}{j})_j|\ \forall j} - (K_x - K^L) \perrora{x} - K^L \serror{x} 
    \end{equation*}
    is clearly bounded by the bound of $\perror{z}{j}$, $\perrora{x}$, and $\serror{x}$ (given by Lemma~\ref{thm:forward-error} and \ref{thm:self-estimation}). Thus, we can write the bound as
    \begin{equation*}
        \norm{\epsilon(t)} \leq \alpha_1(t) \maxnorm{\error{\hat{\mu}}} + \alpha_2(t) \maxnorm{\error{\hat{w}}}
    \end{equation*}
    using some polynomial $\alpha_1$ and $\alpha_2$ (whose concrete forms are not needed for this proof).

    Since $A-BK^L$ has only eigenvalues with norms $< 1$, and $\norm{\epsilon(t)}$ grows at most at polynomial rate, we can bound the solution to \eqref{eq:closed-loop-error-dynamics} with \eqref{eq:closed-loop-bound}.
\end{proof}

\section{Experimental Evaluation}

\begin{figure}
    \includegraphics[width=0.9\linewidth]{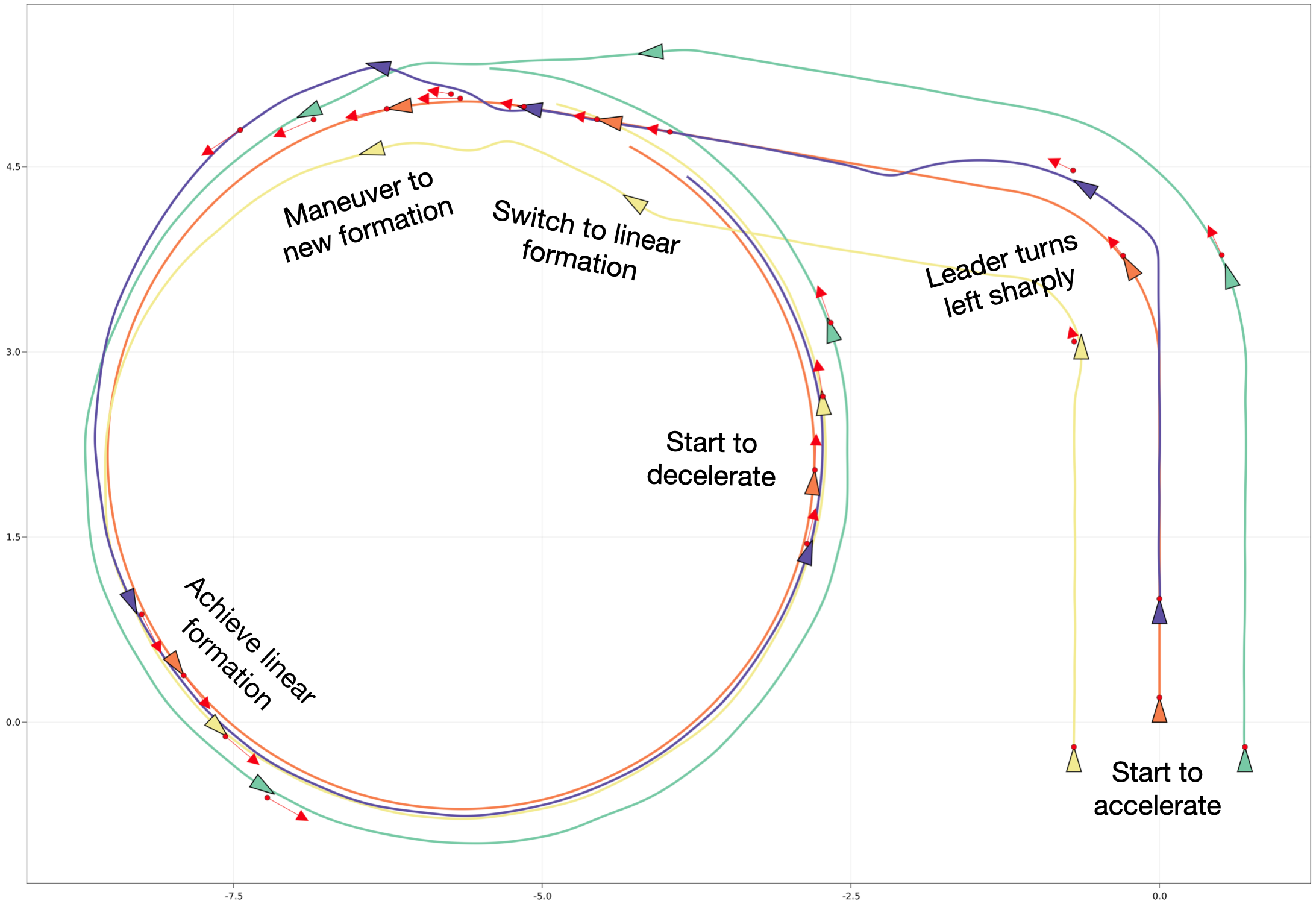}
    \centering
    \caption{
        \label{fig:formation-switching}
        \textbf{Simulation of 2D Formation Switching running \algName{}}. The orange car is the formation leader and is controlled by external inputs, while the three other cars try to follow the leader. Red dots show formation reference points.}
    \vspace{-0.3cm}
\end{figure}

\heading{Implementation} \label{sec:impl}
Our implementation of \algName{} consists of around 2000 lines of Julia code and allows the user to write centralized controllers as ordinary Julia functions. To minimize the regret loss \eqref{eq:regret-loss}, we use the L-BFGS optimizer~\cite{byrd1995limited} implemented by the Optim package~\cite{mogensen2018optim}, which employs automatic differentiation to compute gradient information needed by the optimization process. To ensure convergence to the same local optimum between time steps when dealing with nonlinear dynamics, we always feed the solution found in the previous time step as the initial solution to the optimizer in the next time step.

\heading{Simulation Tasks}
We use the following 4 simulation tasks to compare \algName{} with several other baseline controllers. We simulate 20 seconds for each task. 

\begin{enumerate}
    \item \textbf{1D Leader Linear:} 1-dimensional leader-follower driving task with a linear (PID) controller. The leader's goal is to reach a desired velocity, while the follower tries to stay close to the leader
    \item \textbf{1D Leader With Obstacle:}\label{sec:1dobs} 1-dimensional leader-follower task with an obstacle of unknown position. The leader can observe the obstacle when it is within its sensor's range and will brake once the obstacle becomes too close. To control velocity, both robots use bang-bang control instead of linear control.
    \item \textbf{2D Formation Driving:}\label{sec:2dformation} 4 car-like robots driving on a 2-dimensional plane. The leader car is assumed to be controlled by external commands (in the form of external observations), while the other four cars follow the leader and try to maintain a circular formation while avoiding collision with each other.
    \item \textbf{2D Formation Switching:} Similar to the task above, but the leader also controls the formation of the fleet. At some point, the leader will switch the formation from a triangle to a straight line. A simulation result of this task is shown in Figure~\ref{fig:formation-switching}. 
\end{enumerate}

\heading{Baseline Controller Frameworks}
We compare \algName{} with 3 other baseline controller frameworks: 
\begin{itemize}
    \item \textbf{Naive:} This is the simplest controller framework that runs the centralized controller without performing any delay compensation. In other words, each agent simply treats their own observations as well as the information broadcasted by other agents as the current state of the fleet.
    \item \textbf{Local:} Under this controller framework, delay compensation is limited to local information. Each agent uses its local actuation history and dynamics model to predict away its state and observation delays.
    \item \textbf{ConstU:} This controller framework employs simple heuristics to perform global compensation by assuming other agent's actuation remains constant during forward prediction. It also performs local compensation using the same strategy as Local.
\end{itemize}

\heading{Task-Sepcific Metrics}
To quantitively measure the performance on each task, in addition to the loss defined in \eqref{eq:regret-loss}, which is motivated by our proposed algorithm, we also define the following two method-independent, task-specific metrics. 

\begin{itemize}
    \item \textbf{Average Distance} (for task 1 and 2): defined as the time-averaged distance between the leader and follower in L2 norm, given by $\sqrt{\frac{1}{T}\int_0^T (p_1-p_2)^2 dt}$, where $p_1$ and $p_2$ are the positions of each car, and $T$ is the length of the simulation.
    \item \textbf{Average Deviation} (for task 3 and 4): defined as the time-averaged distance between each follower and their supposed position in the fleet formation, given by $\sqrt{\frac{1}{T(N-1)}
    \int_0^T \sum_{i=2}^{N}{\norm{p_i- \hat{p}_i}^2} dt}$, where $\hat{p}_i$ is the supposed position of follower $i$.
\end{itemize}

\heading{Default Parameters}
Unless stated otherwise, we use the following set of parameters across different tasks: We run all controllers at 20Hz, with communication delay $\comDelay=$ 50ms, observation delay $\obsDelay=$ 30ms, and actuation delay $\actDelay=$ 40ms\footnote{In our discrete formulation time is discrete. Hence, to handle non-integral delays like 30ms (which is less than 1 time step under 20Hz control), we actually run our framework at 100Hz but only replan actuation at every 5th time step.}. For sensor noise, we add Gaussian noise to the state vector at every time step, and for external disturbance, we also add Gaussian noise but it is limited only to velocity and steering angle. In both cases, the default noise strength is 0.005 (in SI units) at 100Hz. We set \algName{}'s prediction horizon to be $H = 20$, which corresponds to a time span of 1 second. By default, we also assume the dynamics models accurately match the true dynamics (excluding noise).

\begin{table}
    \centering
    \vspace{0.1cm}
    \caption{\label{tab:performance-loss}
        Simulation Performance (log loss)}
    {\small
    \csvautobooktabularcenter{"data/log_loss.csv"}
    }
\end{table}

\begin{table}
    \centering
    \caption{\label{tab:performance-metric}
        Simulation Performance (task-specific metrics)}
    {\small
    \csvautobooktabularcenter{"data/custom_metrics.csv"}
    }
    \vspace{-0.5cm}
\end{table}

\subsection{Performance under Default Parameters}

We run \algName{} along with the 3 other baselines under the default parameters and compare their performance in Table~\ref{tab:performance-loss} (regret loss) and Table~\ref{tab:performance-metric} (task-specific metrics). Each datum is obtained by averaging 100 random runs. \algName{} achieves consistently the best performance in all tasks under both performance measurements. 

\subsection{Sensitivity Analysis}

\begin{figure}
    \includegraphics[width=1\linewidth]{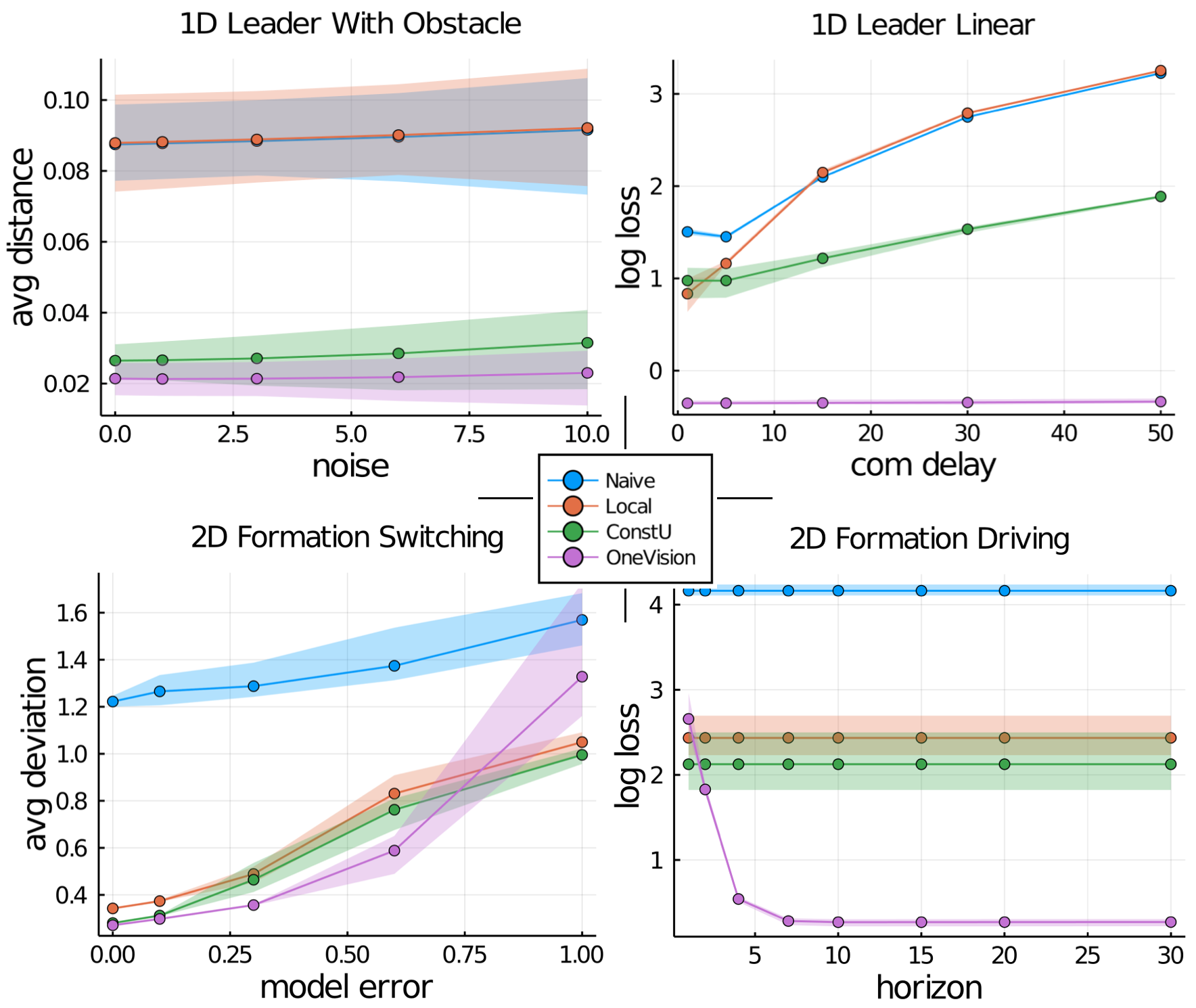}
    \centering
    \caption{
        \label{fig:sensitivity-analysis}
        Representative examples showing impact of sensor noise (upper left), communication delay (upper right), model inaccuracy (lower left), and prediction horizon (lower right).  }
    \vspace{-0.1cm}
\end{figure}

In this section, we modify the default parameters to study the impact of sensor noise, disturbance, delays, model inaccuracy, and prediction horizon. We vary each of these variables and compare the performance of different controller frameworks on all 4 simulation tasks (all results are measured using 10 random runs). Due to space constraints, we only present a few representative examples in this section and describe general trends we observed for the rest cases.\footnote{ Our full results available at \texttt{https://git.io/JqJ2x}}

\heading{Sensor Noise (Figure~\ref{fig:sensitivity-analysis} upper left)} We modify the sensor noise strength from 0 the 10 times the default strength and found that \algName{} still achieves the best performance across this range. We also observe similar trends when varying the amount of communication delay from 10ms to 500ms, both measured in log loss and in average distance/deviation. Particularly, for the 1D Leader Linear task, we notice that \algName{}'s performance is almost unaffected by the amount of the delay (Figure~\ref{fig:sensitivity-analysis} upper right), confirming the results given by Theorem~\ref{thm:main-result}. 

\heading{Model Error (Figure~\ref{fig:sensitivity-analysis} lower left))} To study the sensitivity w.r.t. model error, we define model error for task 1 and 2 as $r_1-1$, where $r_1$ is the ratio between the modeled car acceleration and the actual acceleration, and for task 3 and 4 as $r_2-1$, where $r_2$ is the ratio between the modeled car wheelbase and the true wheelbase. By ranging the model error from  0 to 100\%, we found that although \algName{} still has better performance than other baselines when the error is small ($\leq\!60\%$), \algName{} is generally more sensitive to model error due to its heavy reliance on forward prediction. We also observe similar trends when varying the amount of external disturbances from 0X to 10X.

\heading{Prediction Horizon (Figure~\ref{fig:sensitivity-analysis} lower right))} As an ablation study, we change the prediction horizon $H$ used by \algName{} from 1 to 30 (default value is 20). We see that a very short horizon negatively impacts \algName{}'s performance, suggesting that the local planning step plays an important role, but a horizon longer than 10 (corresponding to 0.5s) does not further improve the performance.

\begin{figure}
    \centering
    \includegraphics[width=0.7\linewidth]{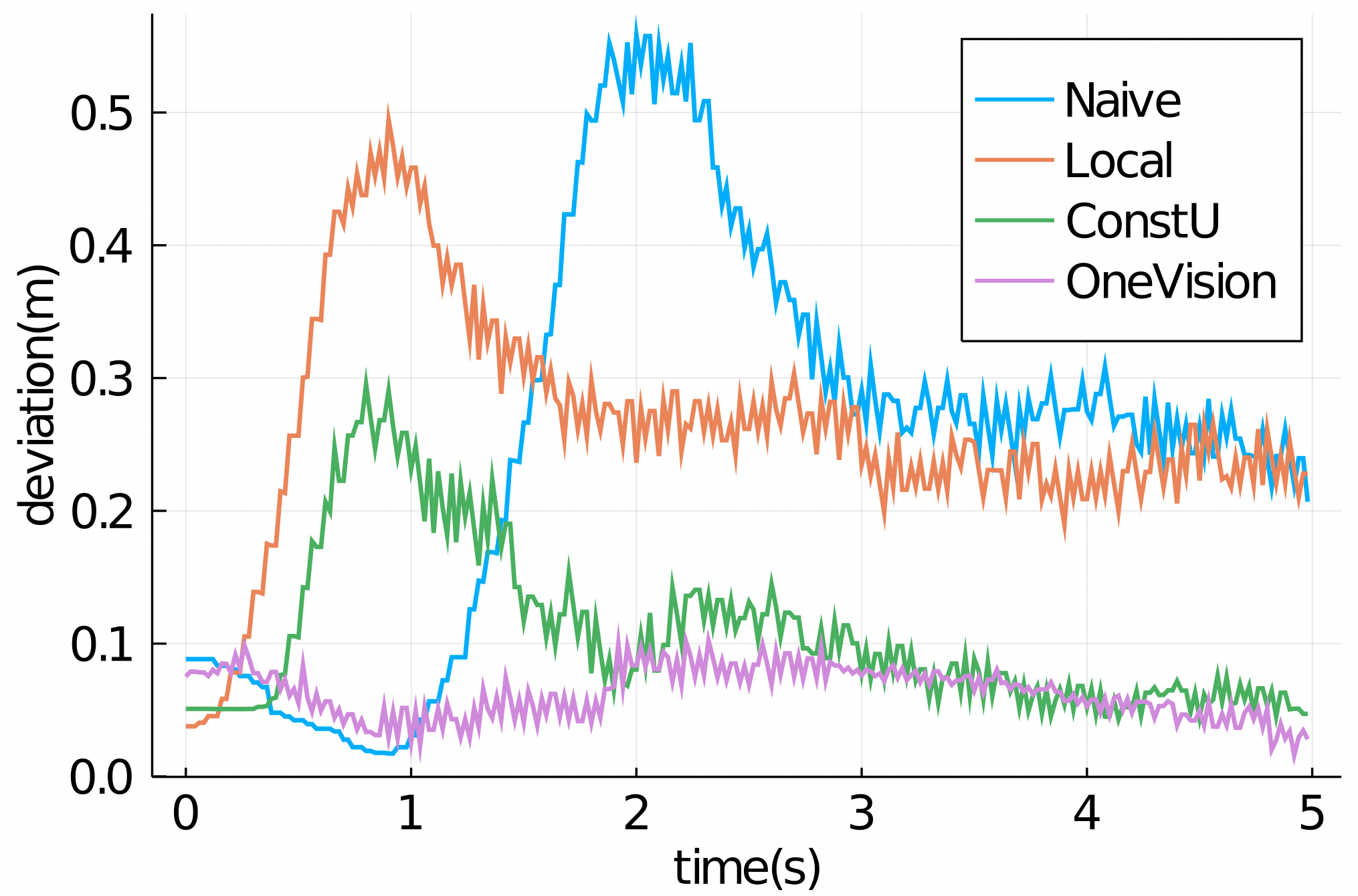}
    \caption{
        \label{fig:deviation}
        Deviation over time in real-world experiment. }
    \vspace{-0.5cm}
\end{figure}

\subsection{Real-World Experiments}

We implemented two real-world versions of the simulation tasks on the UT AUTOmata, a fleet of scale 1/10 autonomous cars. All sensing and computation is performed on-board---the cars are equipped with 2D LIDAR for sensing, and an Nvidia Jetson TX2 for computation. Each car runs Episodic non-Markov Localization~\cite{biswas2016episodic} using observations from the LIDAR to estimate their pose in the world, and communicates to the other cars via WiFi. To ensure that the noise margin of localization is lower than the errors in distributed control stemming from delays, and to account for variability in transmission, the communication queue performs per-message throttling to ensures that all messages have a constant delay of 300ms for all controllers. We run the controllers at 50Hz and use the estimated delay parameters $\stateDelay=$ 40ms and $\actDelay=$ 80ms.  We summarize our results below and highlight the major differences from the simulation tasks.

\heading{1D Leader with Obstacle (quantitative study)} 
We modify our 2D reference point tracking controller to fix the leader's reference point to be always on a straight line. We also use 3 cars instead of 2 and maintain a triangular formation.
For each controller framework, we run the experiment 5 times and report the average deviation below.

\begin{center}
    \small
    \csvautobooktabularcenter{"data/real_deviation.csv"}
\end{center}

We also plot how deviation changes over time in Figure~\ref{fig:deviation}, which matches the expected trends and  shows \algName{}'s superior performance compared to other baselines.

\heading{2D Formation Switching (qualitative study)} We set up the formation so that the leader was tele-operated by one of the authors, while the other two cars followed the leader in formation. Upon command from the leader, the followers had to switch formations while performed obstacle avoidance. We observed robust performance and were able to successfully finish the task using \algName{}. The followers were able to quickly converge back to their target formation in response to variation in the leader's action. We recorded an example execution as part of our supplementary video.

\bibliography{citations} 
\bibliographystyle{abbrv}



\end{document}